\documentclass[11pt]{article}

\usepackage[T1]{fontenc}
\usepackage[utf8]{inputenc}
\usepackage{lmodern}
\usepackage[margin=1in]{geometry}
\usepackage{microtype}
\usepackage{amsmath,amssymb,amsthm,mathtools}
\usepackage{booktabs}
\usepackage{array}
\usepackage{tabularx}
\usepackage{makecell}
\usepackage{threeparttable}
\usepackage{siunitx}
\usepackage{graphicx}
\usepackage{tikz}
\usetikzlibrary{arrows.meta,positioning,fit}
\usepackage{float}
\usepackage{enumitem}
\usepackage[numbers,sort&compress]{natbib}
\usepackage{xcolor}
\usepackage{hyperref}

\hypersetup{
  colorlinks=true,
  linkcolor=blue!55!black,
  citecolor=blue!55!black,
  urlcolor=blue!55!black
}

\newtheorem{theorem}{Theorem}[section]
\newtheorem{corollary}[theorem]{Corollary}

\newtheorem{remark}[theorem]{Remark}
\theoremstyle{definition}

\newcommand{\E}{\mathbb{E}}

\newcommand{\norm}[1]{\left\lVert #1 \right\rVert}
\newcommand{\method}{CC-AOS}
\newcommand{\cost}{\lambda}
\newcommand{\Htrain}{\mathcal{H}_{\mathrm{train}}}
\newcommand{\Costs}{\Lambda}

\title{\textbf{CC-AOS: Cost- and Horizon-Conditioned Amortized Backward Induction for Finite-Horizon Optimal Stopping}}
\author{Tianwei Yu}
\date{Draft v1 -- July 23, 2026}

\begin{document}
\maketitle

\begin{abstract}
Finite-horizon optimal stopping is a central problem in early time-series classification, where a system must decide at each sequence prefix whether the expected benefit of another observation justifies its acquisition cost. Existing data-driven backward-induction methods typically solve each cost--horizon operating point separately, so changing operating conditions requires repeated optimization and separate model stacks, making continuous cost adaptation and multi-horizon deployment inefficient. We propose \method{} (Cost- and Horizon-Conditioned Amortized Optimal Stopping), a structured amortized solver for a family of finite-horizon stopping problems with continuous costs and multiple horizons. \method{} learns a shared continuation-value model conditioned on the current state, absolute time, remaining horizon, and acquisition cost through joint amortized fitted backward induction. We establish that the exact value and continuation functions are nondecreasing, concave, and horizon-dependently Lipschitz in cost, encode these properties in the model architecture, and derive residual-based bounds on value and policy errors. Experiments on controlled Gaussian and time-varying non-Gaussian processes and the FordA engine-noise time-series benchmark compare \method{} with representative per-operating-point backward-induction solvers and tuned static stopping rules. At six unseen FordA cost--horizon pairs, one \method{} checkpoint achieved a lower terminal-risk-plus-sampling-cost objective than independently fitted Convex Function Learning at all six pairs, with an average reduction of 15.75\%, while matching the tuned static thresholds on average.
\end{abstract}

\noindent\textbf{Keywords:} optimal stopping; early time-series classification; backward induction; amortized optimization; shape-constrained neural networks.

\section{Introduction}

\subsection{Finite-horizon stopping under multiple operating points}

Sequential decision systems often need to act before all potentially available information has been collected. In early time-series classification, for example, a prediction can be made from the current prefix or postponed until another observation is acquired. Acting early saves time, energy, computation, or measurement effort, but may increase the terminal decision error; waiting can improve the prediction, but its benefit must justify both the additional acquisition cost and the delay \citep{wald1947sequential,xing2009early}. This tension makes the stopping rule, rather than classification accuracy alone, central to the decision problem.

The difficulty becomes sharper under a finite observation budget. Near the beginning of a sequence, several future observations remain available, whereas near the deadline the same current evidence may warrant a different action. Backward induction captures this stage dependence by comparing the terminal risk \(g(s)\) of stopping at state \(s\) with the complete continuation risk \(\cost+C_h(s;\cost)\). Here, \(\cost\) is the cost of the next observation and \(C_h\) is the expected future optimal risk after that observation, with \(h\) acquisitions remaining. Thus, the continuation value and the resulting stopping boundary depend jointly on the current state, the remaining horizon, and the sampling cost \citep{chow1971great,peskir2006optimal}.

In many applications, however, neither the cost nor the available horizon is fixed once and for all. Resource availability may change, several latency budgets may need to be supported, or a deployed system may need to trade accuracy against delay at different operating conditions. The relevant object is therefore not a single stopping policy, but a related family of finite-horizon problems indexed by the sampling cost and horizon. Accordingly, this work studies a unified model that learns this family coherently and can be queried at cost--horizon pairs that were not solved separately during training.

\subsection{Limitations of per-operating-point solvers}

Backward induction provides a principled solution once an operating point is specified. In data-driven settings, its conditional expectations can be estimated by regression, yielding stagewise approximations of the continuation value \citep{longstaff2001valuing,tsitsiklis2001regression}. Recent early-classification methods, including FIRMBOUND, combine learned sequential statistics with this finite-horizon recursion \citep{ebihara2025firmbound}. Despite differences in their estimators, such solvers generally treat the sampling cost and maximum horizon as fixed problem parameters: a continuation model is fitted for the requested operating point.

This formulation is effective when only one setting is needed, but it scales poorly to a changing collection of settings. Each new cost or horizon can require new recursive targets, another optimization run, and another stack of stagewise models. More importantly, a collection of independently fitted solvers gives values only at the operating points that were explicitly solved. It neither defines how the continuation value should behave between those points nor ensures that their approximations form one Bellman-consistent family.

Parameter conditioning appears to offer an immediate remedy, but an unconstrained shared network leaves an important part of the stopping problem unused. For a fixed state and remaining horizon, the exact optimal value has a particular geometry as sampling cost varies: it is nondecreasing, concave, and Lipschitz with a constant determined by the number of acquisitions still available. A useful amortized solver should therefore do more than compress several separately trained models. It should couple the operating points through a shared Bellman learning procedure while preserving the cost structure of the underlying finite-horizon problem.

\subsection{Structured amortized optimal stopping}

These requirements motivate Cost- and Horizon-Conditioned Amortized Optimal Stopping (\method{}). The starting point is that stopping problems at different costs and horizons are not unrelated tasks: they share the same state process, terminal risk, and finite-horizon recursion. \method{} exposes this common structure by treating cost and remaining horizon as conditioning variables of a single continuation model \(C_\theta(s,t,h,\cost)\), rather than as fixed settings chosen before optimization.

The model is learned by joint amortized fitted backward induction. Recursive targets are constructed over continuously sampled costs and transitions drawn from multiple training horizons, so all operating points contribute to one conditional value surface. To make the learned surface reliable along the continuous cost axis, a normalized soft-min affine head encodes the monotonicity, concavity, and remaining-horizon-dependent Lipschitz structure established for the exact problem.

The resulting formulation connects representation, learning, and analysis around the same continuation-value family. We derive residual-based bounds that relate Bellman approximation error to value-function error and the performance loss of the induced stopping policy, and evaluate whether one checkpoint transfers to unseen cost--horizon pairs. Experiments span controlled Gaussian and time-varying non-Gaussian processes as well as the FordA engine-noise benchmark under causal prefix-only normalization. On FordA, \method{} improves over independently fitted Convex Function Learning at all six unseen operating points, with five paired confidence intervals entirely below zero, while remaining essentially tied on average with a strong validation-tuned static threshold.

\subsection{Contributions}

The main contribution of this paper is \method{}, a structured amortized solver for a family of finite-horizon stopping problems with continuous sampling costs and multiple horizons. The solver is built on three connected contributions:
\begin{enumerate}[leftmargin=*,label=(\arabic*)]
  \item \textbf{Shared cost--horizon continuation model.} \method{} represents the stopping-problem family with a single continuation model \(C_\theta(s,t,h,\cost)\), conditioned on the current state, absolute time, remaining horizon, and sampling cost. One trained checkpoint can therefore be queried directly to construct stopping policies at multiple and held-out cost--horizon pairs.
  \item \textbf{Joint amortized training method.} We develop a joint amortized fitted backward-induction method that constructs recursive Bellman targets over continuously sampled costs and multiple training horizons. The method learns the shared continuation family within one optimization process while retaining the recursive coupling across decision stages and operating conditions.
  \item \textbf{Structure-preserving architecture and theoretical analysis.} We establish the monotonicity, concavity, and remaining-horizon-dependent Lipschitz properties of the exact value and continuation functions with respect to sampling cost. We encode these properties in a normalized soft-min affine continuation head and derive residual-based bounds for value-function error and stopping-policy suboptimality.
\end{enumerate}

\section{Related Work}

\method{} lies at the intersection of three connected lines of work. Finite-horizon optimal stopping and early time-series classification define the underlying risk--acquisition trade-off. Data-driven continuation learning and sequential density-ratio estimation provide the basis for learning finite-horizon stopping decisions from trajectories. Parameter-conditioned learning and shape-constrained approximation then provide the tools needed to share a model across operating conditions without discarding known value-function structure.

\subsection{Finite-horizon optimal stopping and early time-series classification}

Classical sequential analysis formalizes how evidence should be accumulated before a decision is made \citep{wald1947sequential}. For simple hypothesis tests, the sequential probability ratio test gives a canonical stopping rule with established optimality properties \citep{wald1948optimum}. When observations are limited by a finite deadline, however, the decision becomes stage dependent: the value of another observation changes with the time remaining. Dynamic programming and backward induction provide the standard formulation for this finite-horizon setting \citep{chow1971great,peskir2006optimal}.

Early time-series classification places the same stopping problem in a predictive setting. The foundational formulation seeks accurate decisions from short sequence prefixes \citep{xing2009early,xing2012early}. Non-myopic approaches explicitly account for the possible benefit of future observations \citep{dachraoui2015nonmyopic}, while later work jointly optimizes accuracy and earliness or learns adaptive halting policies \citep{mori2018simultaneously,hartvigsen2019adaptive}. Other methods emphasize reliable early decisions through ensemble agreement or explicit control of the accumulated accuracy gap \citep{schaefer2020teaser,ringel2024accumulated}. \method{} follows this cost-sensitive finite-horizon view: stopping incurs a terminal prediction risk, whereas continuing incurs an acquisition cost and preserves the option to decide later.

\subsection{Data-driven backward induction and amortized value learning}

Finite-horizon backward induction requires conditional expectations of future optimal values. Least-squares Monte Carlo and regression-based dynamic programming estimate these continuation values stage by stage from simulated trajectories \citep{longstaff2001valuing,tsitsiklis2001regression}. Statistical-learning analyses further connect continuation regression to optimal-stopping approximation \citep{egloff2005montecarlo}. Neural approaches extend data-driven stopping to high-dimensional state spaces \citep{becker2019deep,becker2021highdimensional}, while interpretable policy classes and randomized neural features provide alternative approximations \citep{ciocan2022interpretable,herrera2024randomized}.

For early classification, sequential density-ratio estimation learns class evidence directly from sequence prefixes and optimizes the speed--accuracy trade-off \citep{ebihara2021sequential,miyagawa2021power}. Recent work further combines data-derived sequential statistics, posterior terminal risk, and fitted finite-horizon backward induction \citep{ebihara2025firmbound}. Within this data-driven stopping framework, \method{} changes the granularity of the solver from a continuation stack for one prescribed cost and horizon to a continuation family indexed jointly by both quantities.

This extension draws on the broader principle of amortized optimization, which learns a parameter-to-solution mapping so that related problem instances share computation \citep{amos2023tutorial}. Universal value-function approximators similarly condition a value model on task variables \citep{schaul2015universal}, and multi-objective reinforcement learning conditions policies on changing preference weights \citep{abels2019dynamic,yang2019generalized}. These works provide the conceptual basis for treating cost and horizon as model inputs. \method{} specializes that principle to finite-horizon continuation learning by constructing recursive targets jointly over continuous costs and multiple training horizons within one fitted backward-induction method.

\subsection{Shape-constrained approximation for continuation values}

Shape-constrained neural networks provide mechanisms for incorporating known functional structure into a learned approximation. Monotonic networks enforce prescribed coordinatewise responses \citep{sill1997monotonic}, while input-convex neural networks encode convexity with respect to selected inputs \citep{amos2017inputconvex}. Subsequent architectures broaden the design space for monotonic approximation and improve its flexibility \citep{wehenkel2019unconstrained,runje2023constrained}.

\method{} builds on these architectural ideas using constraints derived from the finite-horizon Bellman problem itself. With \(h\) acquisitions remaining, the exact value is nondecreasing, concave, and \(h\)-Lipschitz in sampling cost; the corresponding continuation value is \((h-1)\)-Lipschitz. The normalized soft-min affine head imposes this horizon-dependent cost geometry while allowing state, time, and remaining horizon to determine the affine components. In this way, data-driven backward induction, amortized parameter conditioning, and shape-constrained approximation are combined within a single continuation-value family.

\section{Problem Formulation}

\subsection{Sequential information and stopping risk}

Let \(Y\in\{1,\ldots,K\}\) be the class label and let \(X_{1:t}=(X_1,\ldots,X_t)\) be the sequence prefix observed at time \(t\). The available information is represented by the filtration \(\mathcal{F}_t=\sigma(X_{1:t})\). A causal state estimator maps each prefix to the decision state
\begin{equation}
S_t=\phi(X_{1:t})\in\mathcal{S},\qquad t=1,\ldots,H.
\label{eq:decision-state}
\end{equation}
We assume that \(S_t\) is a Markov state for the stopping problem. For a time-inhomogeneous process, absolute time is included in an augmented state; the learned model in Section~4 retains \(t\) explicitly. When \(S_t\) is learned, it is treated as the working decision state rather than claimed to be an exact sufficient statistic.

Let \(\mathcal{A}\) be the terminal action space and let \(\ell(a,Y)\) be the loss incurred by action \(a\). The risk of stopping at state \(s\) and issuing the Bayes action is
\begin{equation}
g(s)
=
\min_{a\in\mathcal{A}}
\E\!\left[\ell(a,Y)\mid S_t=s\right].
\label{eq:terminal-risk}
\end{equation}
For zero--one classification loss, this reduces to \(g(s)=1-\max_y\Pr(Y=y\mid S_t=s)\). We assume \(0\le g(s)\le B<\infty\). Acquiring one additional observation incurs a constant cost \(\cost\ge0\).

\subsection{Finite-horizon objective and Bellman recursion}

The maximum sequence horizon is \(H\). At time \(t\), the number of additional observations still available is
\begin{equation}
h=H-t.
\label{eq:remaining-horizon}
\end{equation}
Let \(\mathfrak{T}_{t,h}\) be the set of \(\{\mathcal{F}_{t+k}\}_{k=0}^h\)-adapted stopping times that count additional acquisitions, so every \(\tau\in\mathfrak{T}_{t,h}\) takes values in \(\{0,\ldots,h\}\). Conditional on \(S_t=s\), the finite-horizon optimal value is
\begin{equation}
V_h(s;\cost)
=
\inf_{\tau\in\mathfrak{T}_{t,h}}
\E\!\left[
g(S_{t+\tau})+\cost\tau
\mid S_t=s
\right].
\label{eq:stopping-objective}
\end{equation}
Thus, \(\tau=0\) represents immediate stopping, while \(\tau=k\) incurs the cost of \(k\) additional observations. We assume that \(g(S_{t+\tau})\) is integrable and that the state transition law is independent of \(\cost\).

For any bounded measurable function \(f\), define the one-step transition operator
\begin{equation}
(Pf)(s)
=
\E\!\left[f(S_{t+1})\mid S_t=s\right].
\label{eq:transition-operator}
\end{equation}
The post-acquisition continuation value for \(h\ge1\) is then
\begin{equation}
C_h(s;\cost)
=
(PV_{h-1})(s;\cost)
=
\E\!\left[V_{h-1}(S_{t+1};\cost)\mid S_t=s\right].
\label{eq:exact-continuation}
\end{equation}
The current acquisition cost is excluded from \(C_h\); the complete continuation risk is \(\cost+C_h\). Backward induction gives
\begin{align}
V_0(s;\cost)
&=g(s),
\label{eq:terminal-boundary}\\
V_h(s;\cost)
&=
\min\!\left\{
g(s),\,\cost+C_h(s;\cost)
\right\},
\qquad h\ge1.
\label{eq:bellman}
\end{align}
Accordingly, the Bayes-optimal stopping rule is
\begin{equation}
\pi_h^\star(s;\cost)
=
\begin{cases}
\mathrm{stop}, & g(s)\le \cost+C_h(s;\cost),\\
\mathrm{continue}, & g(s)>\cost+C_h(s;\cost),
\end{cases}
\qquad h\ge1,
\label{eq:optimal-policy}
\end{equation}
with mandatory stopping at \(h=0\).

\subsection{The cost--horizon continuation family}

An operating point is \(\omega=(\cost,H)\), where \(\cost\in\Costs\) and \(H\) belongs to a set of admissible finite horizons. For a fixed \(\omega\), conventional backward induction produces the stack
\[
\bigl\{C_h(\,\cdot\,;\cost)\bigr\}_{h=1}^{H-1}.
\]
Across operating points, these stacks solve the same recursion under different cost and horizon conditions. Our approximation target is therefore one conditional function
\begin{equation}
C_\theta:
\mathcal{S}\times\mathbb{N}\times\mathbb{N}\times\Costs
\longrightarrow [0,B],
\qquad
C_\theta(s,t,h,\cost)\approx C_h(s;\cost),
\label{eq:conditional-family}
\end{equation}
where \(t\) is the absolute time, \(h=H-t\) is the remaining horizon, and the parameters \(\theta\) are shared over the complete family. Section~4 specifies how this function is learned and converted into a stopping policy.

\section{Method}

\subsection{\method{} overview}

\method{} receives a causal decision state \(S_t\) and an operating condition \((\cost,H)\), and returns a finite-horizon stopping policy. The solver has three connected components: a continuation model shared over cost and remaining horizon, a joint amortized fitted backward-induction method, and a structure-preserving cost head. Figure~\ref{fig:cc_aos_overview} separates the training recursion from deployment with the resulting checkpoint.

\begin{figure}[H]
\centering
\resizebox{0.98\linewidth}{!}{%
\begin{tikzpicture}[
  font=\small,
  box/.style={
    draw=black!55,
    rounded corners=2pt,
    fill=blue!5,
    align=center,
    minimum height=1.0cm,
    text width=3.1cm,
    inner sep=5pt
  },
  key/.style={
    box,
    fill=orange!12,
    draw=orange!65!black
  },
  decision/.style={
    box,
    fill=green!8,
    draw=green!50!black
  },
  arrow/.style={
    -{Latex[length=2.2mm]},
    line width=0.8pt,
    draw=black!70
  }
]

% ================= Training =================

\node[
font=\bfseries,
anchor=east
] at (0.55,1.6) {Training};

\node[box] (trajectory) at (2.4,1.6)
{State transitions\\
$(S_t,S_{t+1})$\\
multi-horizon trajectories};

\node[box] (horizon) at (6.3,1.6)
{Backward induction\\
from small remaining horizon\\
$frozen\ model\rightarrow Bellman\ targets$};

\node[key] (update) at (10.4,1.6)
{Shared model update\\
$C_\theta(S,t,h,\cost)$\\
continuous cost replay};

\node[key] (checkpoint) at (14.0,1.6)
{Save checkpoint\\
$C_\theta$};

\draw[arrow] (trajectory) -- (horizon);
\draw[arrow] (horizon) -- (update);
\draw[arrow] (update) -- (checkpoint);

% ================= separator =================

\draw[dashed, black!35]
(-0.6,0.2) -- (15.2,0.2);

% ================= Deployment =================

\node[
font=\bfseries,
anchor=east
] at (0.55,-1.35) {Deployment};

\node[box] (state)
at (2.4,-1.35)
{Current state query\\
$(S_t,t,h,\cost)$};

\node[key] (load)
at (6.3,-1.35)
{Load shared\\
checkpoint\\
$C_\theta$};

\node[box] (evaluate)
at (10.4,-1.35)
{Evaluate continuation\\
$\cost+C_\theta(S_t,t,h,\cost)$};

\node[decision] (stop)
at (14.0,-1.35)
{Compare with\\
terminal risk \(g(S_t)\)\\
Stop or continue};

\draw[arrow] (state) -- (load);
\draw[arrow] (load) -- (evaluate);
\draw[arrow] (evaluate) -- (stop);

\end{tikzpicture}%
}

\caption{
CC-AOS training and deployment framework.
During training, multi-horizon trajectories are used to generate recursive Bellman targets from frozen models.
A shared structured continuation model $C_\theta$ is updated through continuous-cost replay and stored as a checkpoint.
During deployment, the checkpoint is queried with the current operating condition to determine the stop-or-continue decision.
}

\label{fig:cc_aos_overview}

\end{figure}

\subsection{Shared cost--horizon continuation model}

\method{} parameterizes the continuation term in \eqref{eq:bellman} by \(C_\theta(s,t,h,\cost)\). The corresponding approximate value is
\begin{align}
\widehat V_0(s;\cost)
&=g(s),
\label{eq:vhat-zero}\\
\widehat V_h(s,t;\cost)
&=
\min\!\left\{
g(s),\,\cost+C_\theta(s,t,h,\cost)
\right\},
\qquad h\ge1.
\label{eq:vhat}
\end{align}
A shared feature network maps \((s,t,h)\) to the parameters of the cost head described in Section~\ref{sec:structured-head}. The state \(s\) represents the observed prefix, \(t\) captures absolute-time effects, and \(h=H-t\) specifies the depth of the remaining Bellman recursion. Because \(t+h=H\), the pair \((t,h)\) identifies both the current decision stage and its total finite horizon.

\subsection{Joint amortized fitted backward induction}

The identity in \eqref{eq:exact-continuation} turns continuation learning into conditional regression. Let \((s_t,s_{t+1})\) be a sampled transition, let \(h\ge1\), and draw \(\cost\) from a training distribution \(q\) on \(\Costs\). Given a frozen parameter snapshot \(\bar\theta\), define the recursive target
\begin{equation}
y_h=
\begin{cases}
g(s_{t+1}), & h=1,\\[2mm]
\min\!\left\{
g(s_{t+1}),
\cost+C_{\bar\theta}(s_{t+1},t+1,h-1,\cost)
\right\}, & h>1.
\end{cases}
\label{eq:sample-target}
\end{equation}
Conditioning on \(S_t=s\) and averaging over successor states makes \(y_h\) an estimate of \(P\widehat V_{h-1}\). The shared model is fitted by
\begin{equation}
\mathcal{L}(\theta)
=
\E_{\substack{(S_t,S_{t+1})\sim\mathcal{D}\\
h\sim\mathcal{H}_{\mathrm{train}},\;\cost\sim q}}
\!\left[
\ell_{\mathrm{reg}}\!\left(
C_\theta(s_t,t,h,\cost),y_h
\right)
\right].
\label{eq:training-loss}
\end{equation}
Figure~\ref{fig:cc_aos_training} details the resulting layerwise training process. Its inputs are the state trajectories \(\mathcal{D}\), the set of training horizons \(\Htrain\), the cost interval \(\Costs\), and the terminal risk \(g\). Starting from \(h=1\), the method initializes a shared continuation model and an empty cumulative replay buffer. Before targets are constructed at each remaining-horizon layer, the current parameters are frozen as \(\bar\theta\).

For every training horizon compatible with the current \(h\), the corresponding transition time is \(t=H-h\). Transitions \((s_t,s_{t+1})\) are paired with stratified cost samples \(\cost\sim q\) on \(\Costs\), and the frozen model generates the detached target in \eqref{eq:sample-target}. The resulting tuples \((s_t,t,h,\cost,y_h)\) are added to the replay buffer. The current parameters are then updated on all layers accumulated so far using \eqref{eq:training-loss}, after which \(h\) is incremented and the same cycle is repeated. The final output is one continuation-model checkpoint shared across continuous costs, total horizons, and Bellman layers.

\begin{figure}[t]
\centering
\includegraphics[width=\linewidth]{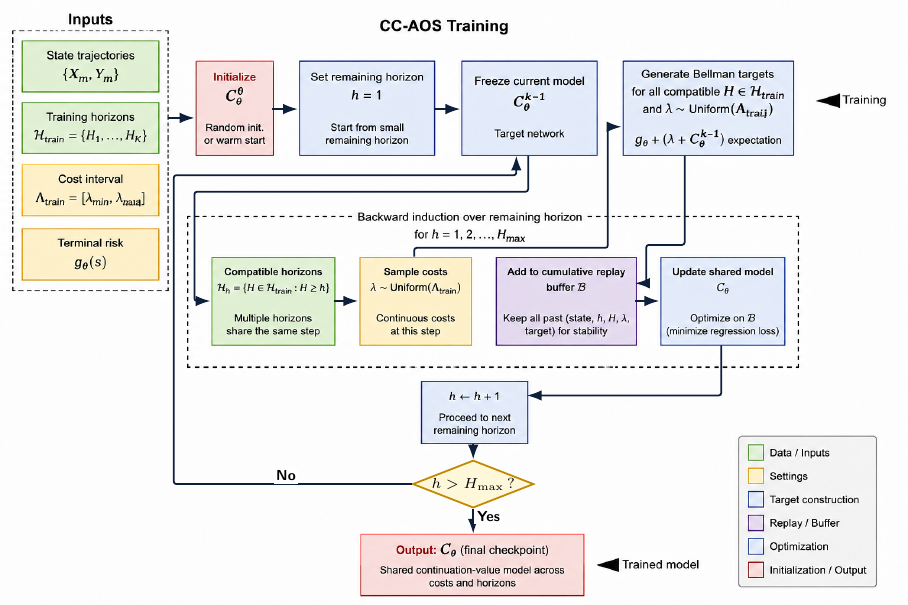}
\caption{Detailed training workflow of \method{}. Backward induction proceeds from small to large remaining horizons. At each layer, a frozen model generates Bellman regression targets for all compatible training horizons and stratified costs sampled over \(\Costs\); the resulting tuples are accumulated in a replay buffer and used to update the shared continuation model before advancing to the next layer. A single cost- and horizon-conditioned checkpoint is retained after the final layer.}
\label{fig:cc_aos_training}
\end{figure}

\subsection{Structure-preserving continuation head}
\label{sec:structured-head}

For fixed \((s,t,h)\), the feature network outputs \(J\) intercept logits and \(J\) slope logits. They are transformed as
\begin{align}
a_j(s,t,h)
&=B\,\sigma\!\left(\widetilde a_j(s,t,h)\right),
\label{eq:intercepts}\\
b_j(s,t,h)
&=(h-1)\,\sigma\!\left(\widetilde b_j(s,t,h)\right),
\label{eq:slopes}
\end{align}
so \(0\le a_j\le B\) and \(0\le b_j\le h-1\). The continuation head is the normalized soft minimum
\begin{equation}
\widetilde C_\theta(s,t,h,\cost)
=
-\rho\log\left[
\frac{1}{J}
\sum_{j=1}^J
\exp\left(
-\frac{a_j(s,t,h)+b_j(s,t,h)\cost}{\rho}
\right)
\right],
\label{eq:softmin}
\end{equation}
where \(\rho>0\) is the temperature. The factor \(1/J\) ensures that identical affine components return their common value. The final estimate is capped at the terminal-risk upper bound,
\begin{equation}
C_\theta(s,t,h,\cost)
=
\min\{B,\widetilde C_\theta(s,t,h,\cost)\}.
\label{eq:continuation-cap}
\end{equation}
For binary zero--one loss, \(B=0.5\). The slope limit \(h-1\) matches the maximum number of acquisitions remaining after the current observation is obtained. The explicit cost in \eqref{eq:vhat} supplies the final unit of cost sensitivity. Section~5 proves the resulting monotonicity, concavity, and Lipschitz properties.

\subsection{Stopping policy and amortized deployment}

Given a query \((s,t,h,\cost)\), \method{} compares the stopping and continuation risks:
\begin{equation}
\widehat\pi_h(s,t;\cost)
=
\begin{cases}
\text{stop}, & g(s)\le \cost+C_\theta(s,t,h,\cost),\\
\text{continue}, & \text{otherwise}.
\end{cases}
\label{eq:greedy-policy}
\end{equation}
If the policy continues, one observation is acquired and \((t,h)\) becomes \((t+1,h-1)\); stopping is mandatory at \(h=0\). Because \(\cost\) and \(h\) are query variables, the same checkpoint constructs stopping policies for all operating conditions represented by the learned continuation family.

\section{Theoretical Analysis}

This section first characterizes the cost geometry of the exact finite-horizon problem, then proves that the structured head preserves it. We finally connect fitted Bellman errors to value and policy performance. Throughout, the horizon is finite, \(g(S_\tau)\) is integrable, and the state transition law is independent of \(\cost\).

\subsection{Cost geometry of exact values and continuations}

\begin{theorem}[Exact value geometry]
\label{thm:exact-geometry}
For every state \(s\) and finite remaining horizon \(h\), \(V_h(s;\cost)\) is nondecreasing and concave in \(\cost\ge0\). Moreover, for all \(\cost_1,\cost_2\ge0\),
\begin{equation}
\left|V_h(s;\cost_2)-V_h(s;\cost_1)\right|
\le
h|\cost_2-\cost_1|.
\label{eq:value-lipschitz}
\end{equation}
\end{theorem}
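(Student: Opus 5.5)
The plan is to argue directly from the variational representation \eqref{eq:stopping-objective}, treating \(V_h(s;\cost)\) as an infimum of functions that are affine in \(\cost\). Fix \(s\) and \(h\). For each stopping time \(\tau\in\mathfrak{T}_{t,h}\), set
\[
J_\tau(\cost)=\E\!\left[g(S_{t+\tau})\mid S_t=s\right]+\cost\,\E\!\left[\tau\mid S_t=s\right].
\]
Because the transition law does not depend on \(\cost\), the set \(\mathfrak{T}_{t,h}\) and the law of \((S_{t+k})_{k\le h}\) are the same for every cost. Hence both coefficients of \(J_\tau\) are independent of \(\cost\), and \(J_\tau\) is affine in \(\cost\). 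Its intercept is finite by integrability (indeed it lies in \([0,B]\)), and its slope \(m_\tau:=\E[\tau\mid S_t=s]\) lies in \([0,h]\) because \(0\le\tau\le h\).

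The three claims then follow from \(V_h(s;\cost)=\inf_\tau J_\tau(\cost)\).

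\emph{Monotonicity.} Each \(J_\tau\) has nonnegative slope, so each is nondecreasing in \(\cost\). A pointwise infimum of nondecreasing functions is nondecreasing.

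\emph{Concavity.} A pointwise infimum of affine functions is concave. It is also finite here, since \(0\le V_h\le B\).

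\emph{Lipschitz bound.} Take \(\cost_1,\cost_2\ge0\). For every \(\tau\),
\[
J_\tau(\cost_2)\le J_\tau(\cost_1)+m_\tau|\cost_2-\cost_1|\le J_\tau(\cost_1)+h|\cost_2-\cost_1|.
\]
Taking the infimum over \(\tau\) on both sides gives \(V_h(s;\cost_2)\le V_h(s;\cost_1)+h|\cost_2-\cost_1|\). Exchanging \(\cost_1\) and \(\cost_2\) gives \eqref{eq:value-lipschitz}. No attainment of the infimum is needed, because the bound holds uniformly in \(\tau\).

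As a cross-check, and to obtain the continuation statement used later, I would also run an induction through the Bellman recursion \eqref{eq:terminal-boundary}--\eqref{eq:bellman}. The base case \(V_0=g\) is constant in \(\cost\), hence monotone, concave and \(0\)-Lipschitz. For the inductive step, suppose \(V_{h-1}(\cdot;\cost)\) has all three properties with constant \(h-1\).
\begin{itemize}[leftmargin=*]
\item Applying \(P\) preserves monotonicity and concavity, since it is a positive linear operator (conditional expectation). It also preserves the Lipschitz constant, since \(|PV(\cost_2)-PV(\cost_1)|\le P|V(\cost_2)-V(\cost_1)|\). So \(C_h\) is nondecreasing, concave, and \((h-1)\)-Lipschitz.
\item Adding \(\cost\) gives a function that is nondecreasing, concave, and \(h\)-Lipschitz.
\item Taking the minimum with the \(\cost\)-constant \(g(s)\) preserves all three properties. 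For the Lipschitz part, \(|\min(a,x)-\min(a,y)|\le|x-y|\).
\end{itemize}

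The only real subtlety is justifying that the representation \eqref{eq:stopping-objective} agrees with the recursion, and that the conditional expectations are well defined for each fixed \(s\). In the recursive route this becomes measurability of \(V_{h-1}(\cdot;\cost)\) and boundedness by \(B\), which follow from \(0\le g\le B\) and induction. I expect to present the recursive induction as the main proof, because it handles \(C_h\) simultaneously, and to keep the stopping-time argument as a remark.
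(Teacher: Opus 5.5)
Your proposal is correct, and its first half is the paper's proof. The paper also writes \(V_h(s;\cost)\) as the pointwise infimum of the affine maps \(L_\tau(\cost)=\E_s[g(S_\tau)]+\cost\,\E_s[\tau]\), with slopes in \([0,h]\), and reads off monotonicity and concavity from that. The only difference is in the Lipschitz step. The paper evaluates an \(\varepsilon\)-optimal rule for \(\cost_1\) at \(\cost_2\) and uses monotonicity for the reverse inequality; your bound, uniform in \(\tau\), together with symmetry avoids the \(\varepsilon\).

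The induction through \eqref{eq:terminal-boundary}--\eqref{eq:bellman}, which you want to make the main proof, is a genuinely different route. In the paper these closure steps appear only in other results. Closure under \(P\) appears in the proof of Corollary~\ref{cor:continuation-geometry}, which takes the theorem for \(V_{h-1}\) as input. Closure under adding \(\cost\) and under \(\min\{g(s),\cdot\}\) appears in the proof of Theorem~\ref{thm:head}.

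\textbf{What your induction buys.} You get the theorem and the corollary in a single pass, and the argument mirrors the proof for the learned head.

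\textbf{What it costs.} It establishes the properties for the recursively defined function. You therefore need the dynamic-programming identity between \eqref{eq:stopping-objective} and the recursion. That identity rests on the Markov property of \(S_t\) and on measurability of \(V_{h-1}(\cdot;\cost)\), which you correctly flag. The paper only asserts it (``Backward induction gives'').

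\textbf{What the stopping-time route buys.} It works directly with the definition used in the theorem. It needs neither the Markov assumption nor the recursion, only that the state law does not depend on \(\cost\); hence neither do the admissible class or the coefficients of \(L_\tau\). It also identifies the slopes as \(\E_s[\tau]\), which is what underlies the envelope remark.

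So if you lead with the induction, keep the stopping-time argument as the self-contained justification of the theorem as stated, not merely as a cross-check.
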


\begin{proof}
For an admissible stopping time \(\tau\le h\), define
\[
L_\tau(\cost)
=
\E_s[g(S_\tau)]
+\cost\,\E_s[\tau].
\]
This is affine in \(\cost\) with slope \(\E_s[\tau]\in[0,h]\). Equation~\eqref{eq:stopping-objective} is the pointwise infimum of the family \(\{L_\tau\}\), hence is concave. Since every member is nondecreasing, the infimum is nondecreasing. For \(\cost_2\ge\cost_1\), evaluating an \(\varepsilon\)-optimal rule for \(\cost_1\) at \(\cost_2\) gives
\[
V_h(s;\cost_2)-V_h(s;\cost_1)
\le h(\cost_2-\cost_1)+\varepsilon.
\]
Letting \(\varepsilon\downarrow0\), and using nondecreasingness for the other direction, proves \eqref{eq:value-lipschitz}.
\end{proof}

\begin{corollary}[Exact continuation geometry]
\label{cor:continuation-geometry}
For \(h\ge1\), \(C_h(s;\cost)=PV_{h-1}(s;\cost)\) is nondecreasing, concave, and \((h-1)\)-Lipschitz in \(\cost\). Consequently, the complete continuation risk \(\cost+C_h(s;\cost)\) has cost slopes in \([1,h]\) wherever it is differentiable.
\end{corollary}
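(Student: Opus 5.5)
The plan is to apply Theorem~\ref{thm:exact-geometry} with horizon \(h-1\) and push each property through the transition operator \(P\). Two facts make this work. First, \(P\) acts only on the state argument. Second, the transition law does not depend on \(\cost\). Consequently, for fixed \(s\), the map \(\cost\mapsto C_h(s;\cost)=\E[V_{h-1}(S_{t+1};\cost)\mid S_t=s]\) is an average, over a \(\cost\)-independent probability measure, of functions \(\cost\mapsto V_{h-1}(s';\cost)\). Each of these functions is nondecreasing, concave, and \((h-1)\)-Lipschitz.

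We then verify the three properties in turn.
\begin{itemize}
\item \emph{Monotonicity.} For \(\cost_1\le\cost_2\), the pointwise inequality \(V_{h-1}(s';\cost_1)\le V_{h-1}(s';\cost_2)\) is preserved by taking conditional expectations.
\item \emph{Concavity.} For \(\alpha\in[0,1]\), the pointwise inequality \(V_{h-1}(s';\alpha\cost_1+(1-\alpha)\cost_2)\ge \alpha V_{h-1}(s';\cost_1)+(1-\alpha)V_{h-1}(s';\cost_2)\) is integrated against the same conditional law. Linearity of \(P\) then gives concavity of \(C_h\).
\item \emph{Lipschitz bound.} We use \(|PV_{h-1}(\cdot;\cost_2)-PV_{h-1}(\cdot;\cost_1)|\le P|V_{h-1}(\cdot;\cost_2)-V_{h-1}(\cdot;\cost_1)|\le (h-1)|\cost_2-\cost_1|\).
\end{itemize}
All expectations are finite because \(0\le V_{h-1}\le g\le B\). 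The case \(h=1\) is consistent with this: \(C_1=Pg\) is constant in \(\cost\), i.e.\ \(0\)-Lipschitz.

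For the statement about the complete continuation risk, \(\cost+C_h(s;\cost)\) is the sum of the identity and a nondecreasing \((h-1)\)-Lipschitz function. At any point of differentiability, the derivative of \(C_h\) lies in \([0,h-1]\): it is nonnegative by monotonicity and bounded by the Lipschitz constant. Adding the derivative \(1\) of the identity gives slopes in \([1,h]\). Concavity also guarantees that one-sided derivatives exist everywhere, so the same bounds hold for left and right derivatives.

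The main obstacle is mild. It is to justify that the cost-independence of the transition law really lets us treat \(P\) as a fixed positive linear averaging operator across different \(\cost\), so that the pointwise inequalities can be integrated with one and the same measure. This also needs measurability of \(s'\mapsto V_{h-1}(s';\cost)\) for each fixed \(\cost\), which I would take as implicit in the definition \eqref{eq:exact-continuation}. Once that is granted, every step is a monotone integration of a pointwise inequality.
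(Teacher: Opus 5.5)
Your proposal is correct and follows the paper's route: apply Theorem~\ref{thm:exact-geometry} at horizon \(h-1\), push monotonicity, concavity, and the Lipschitz bound through the \(\cost\)-independent averaging operator \(P\), then add the identity to shift the slope range from \([0,h-1]\) to \([1,h]\). You additionally spell out the pointwise integrated inequalities, the integrability from \(0\le V_{h-1}\le g\le B\), and the degenerate case \(h=1\), all of which the paper leaves implicit.
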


\begin{proof}
Conditional expectation preserves nondecreasingness and concavity. Applying \eqref{eq:value-lipschitz} to \(V_{h-1}\) and then taking conditional expectation gives the \((h-1)\)-Lipschitz bound. Adding \(\cost\) shifts every slope by one.
\end{proof}

\begin{remark}[Envelope interpretation]
If an optimal rule exists and \(V_h(s;\cdot)\) is differentiable at \(\cost\), then
\[
\partial_\cost V_h(s;\cost)=\E_s[\tau^\star].
\]
At a nondifferentiable cost, the superdifferential is the convex hull of the expected stopping times of active optimal rules. This statement concerns the exact value and motivates the admissible slope range; it is not an identity imposed on finite differences of a learned approximation.
\end{remark}

\subsection{Structure preservation by the conditional head}

\begin{theorem}[Structure-preserving approximation]
\label{thm:head}
Assume \(\cost\ge0\), \(0\le a_j\le B\), and \(0\le b_j\le h-1\) for every component in \eqref{eq:softmin}. For every \(h\ge1\), \(C_\theta(s,t,h,\cdot)\) is nondecreasing, concave, and \((h-1)\)-Lipschitz, with \(0\le C_\theta\le B\). The induced value \(\widehat V_h(s,t;\cdot)\) in \eqref{eq:vhat} is nondecreasing, concave, and \(h\)-Lipschitz. The terminal boundary \(\widehat V_0=g\) holds exactly.
\end{theorem}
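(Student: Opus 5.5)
The plan is to analyze the soft-min head \eqref{eq:softmin} as a function of \(\cost\) for fixed \((s,t,h)\), and then propagate its properties through the cap \eqref{eq:continuation-cap} and the Bellman minimum \eqref{eq:vhat}. Write \(\ell_j(\cost)=a_j+b_j\cost\) and \(w_j(\cost)=\exp(-\ell_j(\cost)/\rho)/\sum_k\exp(-\ell_k(\cost)/\rho)\), so that the weights \(w_j\) are nonnegative and sum to one.

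\textbf{Monotonicity and the Lipschitz bound for \(\widetilde C_\theta\).} Direct differentiation gives
\[
\partial_\cost\widetilde C_\theta=\sum_{j=1}^J w_j(\cost)\,b_j\in[0,h-1].
\]
Since \(\widetilde C_\theta\) is smooth, this shows it is nondecreasing and \((h-1)\)-Lipschitz by the mean value theorem.

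\textbf{Concavity of \(\widetilde C_\theta\).} I will use the identity \(\widetilde C_\theta=-\rho\,\mathrm{LSE}(-\ell/\rho)+\rho\log J\), where \(\mathrm{LSE}(z)=\log\sum_j e^{z_j}\). Log-sum-exp is convex, and its argument \(-\ell(\cost)/\rho\) is affine in \(\cost\), so the composition is convex in \(\cost\). Multiplying by \(-\rho<0\) then gives concavity. As an alternative check, the second derivative equals \(-\tfrac1\rho\mathrm{Var}_w(b)\le0\).

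\textbf{Bounds on \(\widetilde C_\theta\).} The soft minimum lies between the minimum and the weighted mean of the \(\ell_j\), and \(\ell_j\ge0\), so \(\widetilde C_\theta\ge\min_j\ell_j\ge0\). No upper bound \(B\) holds for \(\widetilde C_\theta\) itself, because \(\ell_j\) can exceed \(B\) when \(\cost>0\). This is exactly why the cap is needed.

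\textbf{Effect of the cap.} \(C_\theta=\min\{B,\widetilde C_\theta\}\) is a pointwise minimum of two concave, nondecreasing functions, so it remains concave and nondecreasing. The minimum of two functions that are each \((h-1)\)-Lipschitz (the constant being \(0\)-Lipschitz) is \((h-1)\)-Lipschitz. This gives \(0\le C_\theta\le B\).

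\textbf{Induced value \(\widehat V_h\).} The function \(\cost+C_\theta\) is concave and nondecreasing with slopes in \([1,h]\), hence it is \(h\)-Lipschitz. Taking the minimum with the constant \(g(s)\) again preserves concavity, monotonicity, and the \(h\)-Lipschitz property, by the same min-of-functions lemma. The identity \(\widehat V_0=g\) holds by definition \eqref{eq:vhat-zero}.

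\textbf{Edge case \(h=1\).} Here \(b_j=0\), so \(C_\theta\) is constant in \(\cost\). This is consistent with the \(0\)-Lipschitz claim, and \(\widehat V_1=\min\{g,\cost+C_\theta\}\) is \(1\)-Lipschitz.

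The main obstacle is only bookkeeping. First, the min-of-Lipschitz lemma must be stated carefully; the one-line proof is \(|\min(u_1,v_1)-\min(u_2,v_2)|\le\max(|u_1-u_2|,|v_1-v_2|)\). Second, one must note that the cap handles the upper bound, because the soft-min alone does not guarantee \(\widetilde C_\theta\le B\).
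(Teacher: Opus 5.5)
Your proposal is correct and follows essentially the same route as the paper's proof. Both arguments use the derivative of the head as a convex combination of slopes \(b_j\in[0,h-1]\), concavity from log-sum-exp of affine functions, nonnegativity from \(a_j+b_j\cost\ge0\), and preservation of monotonicity, concavity, and the Lipschitz bound under the cap at \(B\) and the minimum with \(g(s)\). Your explicit min-of-Lipschitz inequality fills in a step the paper leaves implicit.

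One harmless wording slip: the normalized soft minimum is bounded above by the uniform mean \(\frac{1}{J}\sum_j(a_j+b_j\cost)\). The softmax-weighted mean \(\sum_j w_j(a_j+b_j\cost)\) is actually a lower bound, not an upper one. This does not matter, since you only use the lower bound \(\min_j(a_j+b_j\cost)\ge0\).
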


\begin{proof}
Let \(z_j(\cost)=a_j+b_j\cost\). Differentiating the uncapped head yields
\begin{equation}
\partial_\cost\widetilde C_\theta
=
\sum_{j=1}^J w_j(\cost)b_j,
\qquad
w_j(\cost)
=
\frac{\exp[-z_j(\cost)/\rho]}
{\sum_k\exp[-z_k(\cost)/\rho]}.
\label{eq:softmin-derivative}
\end{equation}
The derivative is a convex combination of slopes in \([0,h-1]\). The negative log-sum-exp of negative affine functions is concave, while the factor \(1/J\) adds only a constant. Because every \(z_j\ge0\), the normalized soft minimum is nonnegative. Taking the minimum with the constant \(B\) preserves nondecreasingness and concavity and gives \(0\le C_\theta\le B\). Thus \(C_\theta\) is \((h-1)\)-Lipschitz.

The function \(\cost+C_\theta\) is nondecreasing and concave with slope bounded by \(h\). Taking its pointwise minimum with the constant \(g(s)\) preserves these properties. The terminal statement follows from \eqref{eq:vhat-zero}.
\end{proof}

\subsection{Uniform Bellman residual-to-value bound}

Define the Bellman operator
\begin{equation}
(T_\cost f)(s)
=
\min\{g(s),\cost+(Pf)(s)\}.
\label{eq:bellman-operator}
\end{equation}
For bounded functions \(f\) and \(q\),
\begin{equation}
\norm{T_\cost f-T_\cost q}_\infty
\le
\norm{f-q}_\infty,
\label{eq:nonexpansive}
\end{equation}
because both \(P\) and the scalar minimum are non-expansive in the supremum norm. Define the layerwise population Bellman residual
\begin{equation}
\epsilon_h(\cost)
=
\norm{\widehat V_h-T_\cost\widehat V_{h-1}}_\infty.
\label{eq:bellman-residual}
\end{equation}

\begin{theorem}[Residual-to-value bound]
\label{thm:residual}
If \(\widehat V_0=V_0=g\), then for every \(h\ge1\),
\begin{equation}
\norm{\widehat V_h-V_h}_\infty
\le
\sum_{k=1}^h\epsilon_k(\cost).
\label{eq:value-error}
\end{equation}
If \(\epsilon_k(\cost)\le\bar\epsilon_k\) uniformly for \(\cost\in\Costs\), then
\begin{equation}
\sup_{\cost\in\Costs}
\norm{\widehat V_h(\cdot;\cost)-V_h(\cdot;\cost)}_\infty
\le
\sum_{k=1}^h\bar\epsilon_k.
\label{eq:uniform-value-error}
\end{equation}
\end{theorem}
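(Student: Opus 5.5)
The plan is a telescoping induction on \(h\) that uses only the exact recursion \eqref{eq:bellman}, in operator form \(V_h=T_\cost V_{h-1}\), together with the non-expansiveness \eqref{eq:nonexpansive}. Fix \(\cost\in\Costs\) and write \(e_h=\norm{\widehat V_h-V_h}_\infty\). The base case is \(e_0=0\), which is exactly the hypothesis \(\widehat V_0=V_0=g\).

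For the inductive step, I would insert the intermediate function \(T_\cost\widehat V_{h-1}\) and apply the triangle inequality:
\begin{equation*}
e_h
\le
\norm{\widehat V_h-T_\cost\widehat V_{h-1}}_\infty
+\norm{T_\cost\widehat V_{h-1}-T_\cost V_{h-1}}_\infty
\le
\epsilon_h(\cost)+e_{h-1}.
\end{equation*}
The first term is the residual \eqref{eq:bellman-residual} by definition. The second is bounded by \eqref{eq:nonexpansive}. Unrolling the resulting recursion from \(e_0=0\) gives \eqref{eq:value-error}.

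The uniform statement \eqref{eq:uniform-value-error} then follows by bounding each \(\epsilon_k(\cost)\le\bar\epsilon_k\). The right-hand side \(\sum_{k=1}^h\bar\epsilon_k\) does not depend on \(\cost\), so I can take the supremum over \(\cost\in\Costs\) on the left.

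The main obstacle is making the functions involved well-defined objects on which \(T_\cost\) acts.
\begin{itemize}
\item \emph{Dependence on \(t\).} The learned \(\widehat V_h\) depends on \((s,t)\), whereas \(V_h\) depends on \(s\) only. I would fix the total horizon \(H\), so that \(t=H-h\) is determined by \(h\), and regard \(\widehat V_{h-1}(\cdot)=\widehat V_{h-1}(\cdot,t+1;\cost)\) as a function of the state alone. In a time-inhomogeneous setting, the absolute time is absorbed into the augmented Markov state, so \(P\) is the stage-\(t\) transition operator.
\item \emph{Boundedness.} The sup-norm argument needs bounded functions. This holds because \(0\le g\le B\) and, by Theorem~\ref{thm:head}, \(0\le C_\theta\le B\). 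Hence \(\widehat V_h\in[0,B]\), and \(V_h\in[0,B]\) as well.
\item \emph{Non-expansiveness.} To confirm \eqref{eq:nonexpansive}, I would note that \(|\min\{x,y\}-\min\{x,y'\}|\le|y-y'|\) and that \(\norm{Pf-Pq}_\infty\le\norm{f-q}_\infty\), since \(P\) is a conditional expectation.
\end{itemize}
With these points in place, the telescoping argument above is routine.
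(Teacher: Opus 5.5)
Your proof is correct and follows the paper's own argument: insert \(T_\cost\widehat V_{h-1}\), bound the two resulting terms by the residual \(\epsilon_h(\cost)\) and by the non-expansiveness \eqref{eq:nonexpansive}, unroll from \(e_0=0\), and take the supremum over \(\Costs\) for the uniform version. Your additional remarks fill in details the paper leaves implicit: fixing \(H\) so that \(\widehat V_{h-1}\) becomes a function of the state alone, and checking that all iterates are bounded.
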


\begin{proof}
Adding and subtracting \(T_\cost\widehat V_{h-1}\), then applying \eqref{eq:nonexpansive}, gives
\[
\norm{\widehat V_h-V_h}_\infty
\le
\epsilon_h(\cost)
+
\norm{\widehat V_{h-1}-V_{h-1}}_\infty.
\]
Iterating from the exact terminal boundary proves \eqref{eq:value-error}; taking the supremum over \(\Costs\) proves \eqref{eq:uniform-value-error}.
\end{proof}

\subsection{Greedy-policy performance from continuation residuals}

The deployed policy uses \(C_{\theta,h}\) directly. Define its continuation residual by
\begin{equation}
\delta_h(\cost)
=
\norm{C_{\theta,h}-P\widehat V_{h-1}}_\infty.
\label{eq:continuation-residual}
\end{equation}
The minimum operator is 1-Lipschitz, so \(\epsilon_h(\cost)\le\delta_h(\cost)\). Let \(V_h^{\widehat\pi}\) be the value obtained by the greedy policy in \eqref{eq:greedy-policy}.

\begin{theorem}[Greedy-policy performance]
\label{thm:policy}
For every \(s\), \(h\), and \(\cost\),
\begin{equation}
0
\le
V_h^{\widehat\pi}(s;\cost)-V_h(s;\cost)
\le
2\sum_{k=1}^h\delta_k(\cost).
\label{eq:policy-bound}
\end{equation}
The same statement holds uniformly over \(\Costs\) when the continuation residuals admit uniform bounds on that interval.
\end{theorem}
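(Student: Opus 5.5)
We prove the lower bound directly and the upper bound by induction on \(h\), using the greedy policy's own Bellman recursion. The lower bound \(V_h^{\widehat\pi}\ge V_h\) is immediate from \eqref{eq:stopping-objective}. The greedy rule \eqref{eq:greedy-policy} induces an admissible stopping time in \(\mathfrak{T}_{t,h}\), because its decisions depend only on the current state and time and it stops mandatorily at \(h=0\). Hence its expected cost is at least the infimum.

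For the upper bound, write the policy value recursively:
\[
V_0^{\widehat\pi}=g,\qquad
V_h^{\widehat\pi}(s)=\begin{cases} g(s), & \text{if } \widehat\pi_h \text{ stops at } s,\\ \cost+(PV_{h-1}^{\widehat\pi})(s), & \text{otherwise.}\end{cases}
\]
Set \(e_h=\norm{V_h^{\widehat\pi}-V_h}_\infty\). We aim for the one-step inequality
\[
e_h\le e_{h-1}+2\delta_h+\norm{\widehat V_{h-1}-V_{h-1}}_\infty .
\]
Here the last term is at most \(\sum_{k<h}\epsilon_k\le\sum_{k<h}\delta_k\) by Theorem~\ref{thm:residual} and \(\epsilon_k\le\delta_k\).

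To obtain it, introduce the two quantities compared at state \(s\):
\[
q_{\mathrm{stop}}=g(s),\qquad
q_{\mathrm{cont}}=\cost+PV_{h-1}(s),
\]
so that \(V_h=\min\{q_{\mathrm{stop}},q_{\mathrm{cont}}\}\). The greedy rule instead compares \(g(s)\) with \(\cost+C_{\theta,h}(s)\). This surrogate satisfies
\[
\bigl|C_{\theta,h}-PV_{h-1}\bigr|\le \delta_h+\norm{\widehat V_{h-1}-V_{h-1}}_\infty=:\eta_h .
\]
We now split into cases.
\begin{itemize}
\item If the greedy rule picks the truly minimal action, the only error comes from continuing. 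There \(V_h^{\widehat\pi}-V_h=P(V_{h-1}^{\widehat\pi}-V_{h-1})\le e_{h-1}\).
\item If it picks the wrong action, the standard argument applies: the chosen action has surrogate value no larger than the other, so its true value exceeds the minimum by at most \(2\eta_h\). If that action is continue, the gap \(e_{h-1}\) from the continuation tail is added.
\end{itemize}
Combining the cases gives \(e_h\le e_{h-1}+2\eta_h\).

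The main obstacle is that this naive recursion double-counts. Unrolling it gives
\[
e_h\le 2\sum_k\delta_k+2\sum_k\sum_{j<k}\delta_j,
\]
which is quadratic in the residuals and not the claimed \(2\sum_k\delta_k\). The fix I would try first is to compare the policy not with \(V\) but with \(\widehat V\). Since \(\widehat V_h=\min\{g,\cost+C_{\theta,h}\}\) and the greedy action attains exactly this minimum, we get
\[
V_h^{\widehat\pi}-\widehat V_h=\mathbf 1_{\mathrm{cont}}\bigl(PV_{h-1}^{\widehat\pi}-C_{\theta,h}\bigr)
\le \norm{V_{h-1}^{\widehat\pi}-\widehat V_{h-1}}_\infty+\delta_h .
\]
Unrolling from \(V_0^{\widehat\pi}=\widehat V_0=g\) yields \(\norm{V_h^{\widehat\pi}-\widehat V_h}_\infty\le\sum_{k\le h}\delta_k\). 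Adding Theorem~\ref{thm:residual} gives \(\norm{\widehat V_h-V_h}_\infty\le\sum_{k\le h}\epsilon_k\le\sum_{k\le h}\delta_k\). The triangle inequality then gives exactly \(2\sum_{k=1}^h\delta_k\).

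Uniformity over \(\Costs\) follows by replacing each \(\delta_k(\cost)\) with its uniform bound and taking the supremum, since every step above holds pointwise in \(\cost\).
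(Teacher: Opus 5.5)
Your final argument is correct and is exactly the paper's proof. You bound $\|V_h^{\widehat\pi}-\widehat V_h\|_\infty\le\sum_{k\le h}\delta_k$ through the greedy recursion: both values equal $g$ when the rule stops, and the difference is $P(V_{h-1}^{\widehat\pi}-\widehat V_{h-1})+(P\widehat V_{h-1}-C_{\theta,h})$ when it continues. You then add Theorem~\ref{thm:residual} with $\epsilon_k\le\delta_k$, and use optimality for the lower bound. The first attempt comparing directly against $V_h$ can be dropped. State the continuing-case bound for the absolute value, which the same decomposition gives, so that it unrolls in the sup norm.
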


\begin{proof}
Let \(D_h=\norm{V_h^{\widehat\pi}-\widehat V_h}_\infty\). If the greedy rule stops, both quantities equal \(g\). If it continues, the policy recursion and \eqref{eq:continuation-residual} give
\[
D_h
\le
D_{h-1}+\delta_h(\cost).
\]
Since \(D_0=0\), \(D_h\le\sum_{k=1}^h\delta_k(\cost)\). Theorem~\ref{thm:residual} and \(\epsilon_k\le\delta_k\) also give
\[
\norm{\widehat V_h-V_h}_\infty
\le
\sum_{k=1}^h\delta_k(\cost).
\]
Finally, \(V_h\le V_h^{\widehat\pi}\) because \(V_h\) is optimal. Combining the last two inequalities with the triangle inequality proves \eqref{eq:policy-bound}.
\end{proof}

\subsection{Theoretical implications}

The results establish four linked properties. First, the structured head is reliable along the continuous cost axis by construction: its continuation values and induced full values satisfy the exact monotonicity, concavity, and remaining-horizon Lipschitz pattern. Second, the terminal boundary is exact and the continuation slope range matches the number of future acquisitions still available. Third, uniformly small Bellman residuals control the value approximation accumulated over a finite horizon. Fourth, uniformly small continuation residuals connect fitted continuation learning to an explicit performance bound for the deployed greedy policy. The cost-geometry audit, held-out Bellman diagnostics, and direct policy evaluation in Section~\ref{sec:experiments} are organized around these quantities.

\section{Experiments}
\label{sec:experiments}

\subsection{Experimental setup}

\paragraph{Evaluation settings.}
Three complementary settings are considered. The Gaussian/DRE setting is a controlled binary sequential problem in which a frozen density-ratio estimator supplies the log-likelihood-ratio state used by every stopping solver. It evaluates transfer to six held-out cost--horizon pairs under an estimated state representation. The time-varying non-Gaussian mixture changes its weights, means, and variances over time and uses an oracle log-likelihood ratio, thereby separating stopping approximation from state-estimation error. It evaluates held-out operating points and includes one independently fitted CFL anchor.

FordA is the primary real time-series benchmark and uses its standard train/test split. Its frozen posterior model contains only causal convolutions and per-time-step LayerNorm, so every stopping decision uses the observed prefix alone. All stopping methods reuse the same leakage-free state trajectories. Test prefix accuracy rises from 58.71\% at the first block to 76.36\%, 84.24\%, 90.00\%, 94.09\%, and 95.61\% at blocks 5, 10, 20, 35, and 50, respectively, yielding a nontrivial accuracy--earliness trade-off.

\paragraph{Compared methods.}
We compare \method{} with three references. Per-setting FIRMBOUND with Convex Function Learning (CFL) fits an independent finite-horizon solver at each requested operating point. The per-horizon reference retains the continuation architecture and continuous-cost training used by \method{} but fits a separate model for each horizon, isolating the effect of horizon amortization. The static reference selects a stopping threshold on validation data separately for every \((\cost,H)\). Within each setting, all methods use the same state trajectories and terminal-risk definition.

\paragraph{Metrics and statistics.}
Let \(T_i\) be the stopping time selected for test trajectory \(i\), and let \(\tau_i=T_i-t_0\) be the number of samples acquired after the initial decision time \(t_0\). The primary metric is the empirical posterior objective
\begin{equation}
\widehat J(\cost,H)
=
\frac{1}{N}\sum_{i=1}^{N}
\left[
g\!\left(S_{T_i}^{(i)}\right)+\cost\tau_i
\right],
\label{eq:empirical-posterior-objective}
\end{equation}
which combines terminal posterior risk and sampling cost; lower values are better. Classification error and the expected number of additional samples are retained as diagnostic quantities. Main pairwise comparisons use paired bootstrap confidence intervals over the common FordA test trajectories. The reported FordA checkpoint comes from one frozen training seed, so these intervals quantify test-trajectory variation rather than variation across training runs.

\subsection{Generalization to unseen operating points}

\paragraph{Primary FordA comparison.}
One \method{} checkpoint is trained on \(H=\{20,30,40,50\}\) and \(\cost\in[0.0005,0.008]\), then evaluated directly at the six interleaved pairs formed by \(H=\{25,35,45\}\) and \(\cost=\{0.002375,0.006125\}\). Table~\ref{tab:forda-main} compares the posterior objectives, and Table~\ref{tab:forda-paired} reports the paired trajectory-level differences from CFL. \method{} has a lower posterior objective than per-setting CFL at all six pairs. Relative reductions range from 0.95\% to 31.29\%, with a mean of 15.75\%. The paired-bootstrap 95\% interval lies entirely below zero at five pairs; the interval at \(H=25,\cost=0.006125\) crosses zero slightly.

\begin{table}[H]
\centering
\begin{threeparttable}
\caption{Posterior objective of \method{} and baselines at six unseen FordA operating points.}
\label{tab:forda-main}
\footnotesize
\begin{tabular*}{\linewidth}{@{\extracolsep{\fill}}ccccc@{}}
\toprule
& & \multicolumn{3}{c}{Posterior objective \(\downarrow\)}\\
\cmidrule(lr){3-5}
\(H\) & \(\cost\) & \method{} & CFL & Static\\
\midrule
25 & 0.002375 & \textbf{0.11565} & 0.12472 & \underline{0.11659}\\
25 & 0.006125 & \textbf{0.16266} & \underline{0.16423} & 0.16445\\
35 & 0.002375 & \textbf{0.10129} & 0.13041 & \underline{0.10268}\\
35 & 0.006125 & \underline{0.15625} & 0.19220 & \textbf{0.15311}\\
45 & 0.002375 & \underline{0.09432} & 0.13726 & \textbf{0.09126}\\
45 & 0.006125 & \underline{0.15309} & 0.17798 & \textbf{0.14839}\\
\midrule
\multicolumn{2}{@{}l}{Six-pair mean}
& \underline{0.13054} & 0.15447 & \textbf{0.12941}\\
\bottomrule
\end{tabular*}
\begin{tablenotes}[flushleft]
\footnotesize
\item Lower is better. Bold and underline mark the best and second-best objective in each row, respectively. All values use the causal per-time-step LayerNorm pipeline.
\end{tablenotes}
\end{threeparttable}
\end{table}

\begin{table}[H]
\centering
\begin{threeparttable}
\caption{Paired trajectory-level comparison between \method{} and per-setting CFL.}
\label{tab:forda-paired}
\footnotesize
\begin{tabular*}{\linewidth}{@{\extracolsep{\fill}}ccccc@{}}
\toprule
\(H\) & \(\cost\) & \(\Delta\downarrow\) & 95\% CI & Relative reduction (\%) \(\uparrow\)\\
\midrule
25 & 0.002375 & -0.00907 & [\(-0.01148,-0.00646\)] & 7.27\\
25 & 0.006125 & -0.00156 & [\(-0.00334,\phantom{-}0.00018\)] & 0.95\\
35 & 0.002375 & -0.02912 & [\(-0.03162,-0.02669\)] & 22.33\\
35 & 0.006125 & -0.03595 & [\(-0.03967,-0.03225\)] & 18.70\\
45 & 0.002375 & -0.04295 & [\(-0.04596,-0.04007\)] & 31.29\\
45 & 0.006125 & -0.02489 & [\(-0.02891,-0.02114\)] & 13.98\\
\midrule
\multicolumn{2}{@{}l}{Six-pair mean} & -0.02392 & --- & 15.75\\
\bottomrule
\end{tabular*}
\begin{tablenotes}[flushleft]
\footnotesize
\item \(\Delta=J_{\mathrm{CC\text{-}AOS}}-J_{\mathrm{CFL}}\); negative values favor \method{}. Confidence intervals are paired bootstrap intervals over the common FordA test trajectories.
\end{tablenotes}
\end{threeparttable}
\end{table}

Against the same-architecture per-horizon reference, the mean positive posterior-objective gap is 0.49\% and the maximum is 1.66\%, indicating a small empirical cost for sharing across horizons. Against the static threshold, \method{} wins at three of six pairs and has a six-pair mean of 0.13054 versus 0.12941. The two methods are therefore essentially tied on the six-pair average.

\paragraph{Controlled-process validation.}
The controlled settings test whether the same conclusion persists under different state and transition mechanisms. In the Gaussian/DRE setting, the shared model improves on per-setting CFL at all six held-out pairs, with an average objective reduction of 10.99\%. In the time-varying non-Gaussian mixture, the mean positive gap to same-architecture per-horizon models is 0.55\% and the maximum is 1.53\%. At the anchor \(H=35,\cost=0.00325\), \method{} obtains \textbf{0.25041}, compared with \underline{0.28250} for CFL, an 11.36\% reduction.

\paragraph{Bellman-fit diagnostic.}
As a separate diagnostic of the joint fitted recursion, we evaluate held-out one-step full-value residuals using the target definition in Section~4.3 on FordA states excluded from fitting. Across the audited operating points, the residual MAE ranges from \(0.00263\) to \(0.01102\), the RMSE from \(0.00975\) to \(0.02313\), and the 95th percentile from \(0.00801\) to \(0.04506\). These finite-sample quantities provide empirical counterparts to the layerwise approximation errors in Theorem~\ref{thm:residual} and diagnose how closely the learned recursion matches its held-out Bellman targets.

\subsection{Ablation of the structure-preserving head}

To isolate the contribution of the cost-structure constraints, we compare the normalized soft-min affine head with a parameter-matched unstructured head on the controlled DRE task. The two variants use the same state representation, recursive targets, and training protocol; only the continuation head is changed. Table~\ref{tab:head-ablation} reports both task performance and violations of the cost-dependent properties established in Section~5.

\begin{table}[H]
\centering
\begin{threeparttable}
\caption{Parameter-matched ablation of the structure-preserving continuation head on the controlled DRE task.}
\label{tab:head-ablation}
\small
\renewcommand{\arraystretch}{1.12}
\begin{tabularx}{\linewidth}{@{}p{0.40\linewidth}>{\centering\arraybackslash}X>{\centering\arraybackslash}X@{}}
\toprule
Metric & Structured head & {\makecell[c]{Parameter-matched\\unstructured head}}\\
\midrule
Mean task objective \(\downarrow\)
& \underline{0.059501}
& \textbf{0.059223}\\
\addlinespace
Monotonicity violations \(\downarrow\)
& \textbf{0}
& \underline{7.10\%}\\
\addlinespace
Continuation-concavity violations \(\downarrow\)
& \textbf{0}
& \underline{1.91\%}\\
\addlinespace
Value-concavity violations \(\downarrow\)
& \textbf{0}
& \underline{1.69\%}\\
\addlinespace
Value-Lipschitz violations \(\downarrow\)
& \textbf{0}
& \underline{0.38\%}\\
\bottomrule
\end{tabularx}
\begin{tablenotes}[flushleft]
\footnotesize
\item Lower is better for every metric. Bold and underline mark the best and second-best values, respectively. Violation rates are proportions of audited cost cells.
\end{tablenotes}
\end{threeparttable}
\end{table}

The unstructured head has the lower mean task objective by \(0.000278\), whereas the structured head eliminates all audited monotonicity, continuation-concavity, value-concavity, and value-Lipschitz violations. The parameter-matched comparison therefore attributes the structural behavior to the proposed head rather than to model size, with only a small observed difference in task objective. A complementary FordA audit over 21 costs, three unseen horizons, and 256 trajectories examines 2,962,080 slope cells and detects no violation by the structured model.

\subsection{Computational efficiency}

Table~\ref{tab:efficiency} reports the observed training cost and number of independently fitted solver instances for the evaluated FordA configurations. \method{} trains one shared model in 18.04 s. Training three same-architecture per-horizon references takes 33.48 s in total, while fitting CFL independently at the six evaluated operating points takes approximately 53 min.

\begin{table}[H]
\centering
\begin{threeparttable}
\caption{Observed training and model-management cost on FordA.}
\label{tab:efficiency}
\small
\renewcommand{\arraystretch}{1.12}
\begin{tabularx}{\linewidth}{@{}X X>{\centering\arraybackslash}p{0.18\linewidth}>{\centering\arraybackslash}p{0.20\linewidth}@{}}
\toprule
Method & Training scope & Independent fits \(\downarrow\) & {\makecell[c]{Total wall-clock\\time \(\downarrow\)}}\\
\midrule
\method{}
& Continuous costs and multiple horizons
& \textbf{1}
& \textbf{18.04 s}\\
\addlinespace
Per-horizon reference
& Continuous costs; one model per horizon
& \underline{3}
& \underline{33.48 s}\\
\addlinespace
Per-setting CFL
& One solver per \((\cost,H)\)
& 6
& \(\approx 53\) min\\
\bottomrule
\end{tabularx}
\begin{tablenotes}[flushleft]
\footnotesize
\item Lower is better. Bold and underline mark the best and second-best values, respectively. Wall-clock values are observed measurements for the evaluated configurations.
\end{tablenotes}
\end{threeparttable}
\end{table}

The timing comparison follows the deployment granularity of the methods: \method{} produces one checkpoint for the complete queried family, the per-horizon reference produces three checkpoints, and CFL produces six operating-point-specific solver instances. This observed comparison directly measures the amortization achieved in the evaluated setting.

\section{Discussion}

\method{} converts a collection of operating-point-specific finite-horizon solvers into one cost- and horizon-conditioned fitted stopping model. Across the leakage-free FordA evaluation, one checkpoint serves unseen cost--horizon combinations and improves on per-setting CFL at every tested pair. The controlled Gaussian/DRE and time-varying mixture results provide complementary evidence under estimated and oracle state representations.

The practical value is not that \method{} produces the lowest objective under every comparison. The strong static threshold is competitive and is marginally better on the six-pair FordA average. Rather, \method{} supplies a dynamic, state-dependent stopping boundary, a shared continuation surface, and direct multi-operating-point deployment. A static threshold does not represent a general finite-horizon Bellman boundary or expose a continuation value that can be queried across cost and remaining horizon.

\method{} retains the data-driven state interface used by FIRMBOUND while changing the granularity of the stopping solver. Instead of fitting a continuation stack for one requested operating point, it fits a conditional family through shared recursive targets. The structure-preserving head gives this family verifiable behavior along the continuous cost axis. The observed training times and one-checkpoint representation further support its use when several latency or acquisition-cost configurations must be maintained.

\subsection{Limitations and future work}

The current real-data evidence is concentrated on binary FordA, and the primary checkpoint comes from one frozen training seed; paired bootstrap intervals quantify variation across common test trajectories rather than training randomness. The held-out operating points mainly test cost interpolation within the modeled interval and interleaved-horizon generalization. In addition, the stopping solver depends on a fixed upstream causal state estimator, and the present formulation assumes a constant per-observation cost. Natural extensions include multiclass or higher-dimensional belief states, larger real sequential benchmarks, online cost calibration, time- or state-dependent acquisition costs, and verifiable structure along the horizon axis.

\section{Conclusion}

Finite-horizon optimal stopping is commonly re-solved when the sampling cost or horizon changes, limiting efficient multi-operating-point deployment. \method{} represents a family of such problems with one state-, time-, horizon-, and cost-conditioned continuation model learned by joint amortized fitted backward induction. Its normalized soft-min affine head translates exact monotonicity, concavity, and remaining-horizon Lipschitz geometry into by-construction guarantees, while Bellman and continuation residuals connect approximation quality to value and greedy-policy performance. Across controlled Gaussian/DRE, time-varying non-Gaussian, and leakage-free FordA experiments, one checkpoint serves unseen cost--horizon combinations; on FordA it improves on per-setting CFL at all six pairs and remains essentially tied with a strong static threshold on average. The results support structured amortization as a practical route from repeated finite-horizon solver stacks to one geometrically well-behaved model that can be queried directly across operating points.

\bibliographystyle{unsrtnat}
\bibliography{references}

\end{document}